\title{Smooth Sparse Coding via Marginal Regression for Learning Sparse Representations }
\author[1]{Krishnakumar Balasubramanian\thanks{krishnakumar3@gatech.edu}}
\author[2]{Kai Yu \thanks{yukai@baidu.com}}
\author[1]{Guy Lebanon\thanks{lebanon@cc.gatech.edu}}
\affil[1]{College of Computing, Georgia Tech.}
\affil[2]{Baidu Inc.}
\begin{document}
\maketitle

\begin{abstract}
We propose and analyze a novel framework for learning sparse representations, based on two statistical techniques: kernel smoothing and marginal regression. The proposed approach provides a flexible framework for incorporating feature similarity or temporal information present in data sets, via non-parametric kernel smoothing. We provide generalization bounds for dictionary learning using smooth sparse coding and show how the sample complexity depends on the $L_1$ norm of kernel function used. Furthermore, we propose using marginal regression for obtaining sparse codes, which significantly improves the speed and allows one to scale to large dictionary sizes easily. We demonstrate the advantages of the proposed approach, both in terms of accuracy and speed by extensive experimentation on several real data sets. In addition, we demonstrate how the proposed approach could be used for improving semi-supervised sparse coding.
\end{abstract}

\section{Introduction}
Sparse coding is a popular unsupervised paradigm for learning sparse representations of data samples, that are subsequently used in classification tasks. In standard sparse coding, each data sample is coded independently with respect to the dictionary. We propose a smooth alternative to traditional sparse coding that incorporates feature similarity, temporal or other user-specified domain information between the samples, into the coding process. 

The idea of smooth sparse coding is motivated by the relevance weighted likelihood principle. Our approach constructs a code that is efficient in a smooth sense and as a result leads to improved statistical accuracy over traditional sparse coding. The smoothing operation, which could be expressed as non-parametric kernel smoothing, provides a flexible framework for incorporating several types of domain information that might be available for the user. For example, for image classification task, one could use: (1) kernels in feature space for encoding similarity information for images and videos, (2) kernels in time space in case of videos for incorporating temporal relationship, and (3) kernels on unlabeled image in the semi-supervised learning and transfer learning settings. 

Most sparse coding training algorithms fall under the general category of alternating procedures with a convex lasso regression sub-problem. While efficient algorithms for such cases exist \cite{wright2008robust,lee2007efficient}, their scalability for large dictionaries remains a challenge. We propose a novel training method for sparse coding based on marginal regression, rather than solving the traditional alternating method with lasso sub-problem. Marginal regression corresponds to several univariate linear regression followed by a thresholding step to promote sparsity. For large dictionary sizes, this leads to a dramatic speedup compared to traditional sparse coding methods (up to two orders of magnitude) without sacrificing statistical accuracy. 

We further develop theory that extends the sample complexity result of ~\cite{vainsencher11} for dictionary learning using standard sparse coding to the smooth sparse coding case. We specifically show how the sample complexity depends on the $L_1$ norm of the kernel function used. 

Our main contributions are: (1) proposing a framework based on kernel-smoothing for incorporating feature, time or other similarity information between the samples into sparse coding, (2) providing sample complexity results for dictionary learning using smooth sparse coding, (3) proposing an efficient marginal regression training procedure for sparse coding, and (4) successful application of the proposed method in various classification tasks. Our contributions lead to improved classification accuracy in conjunction with computational speedup of two orders of magnitude.

\section{Related work} 

Our approach is  related to the local regression method \cite{loader1999local,hastie1993local}. More recent related work is \cite{meier2007smoothing} that uses smoothing techniques in high-dimensional lasso regression in the context of temporal data. Another recent approach proposed by~\cite{yu2009nonlinear} achieves code locality by approximating data points using a linear combination of nearby basis points. The main difference is that traditional local regression techniques do not involve basis learning. In this work, we propose to learn the basis or dictionary along with the regression coefficients locally. 


In contrast to previous sparse coding papers we propose to use marginal regression for learning the regression coefficients, which results in a significant computational speedup with no loss of accuracy.  Marginal regression is a relatively old technique that has recently reemerged as a computationally faster alternative to lasso regression \cite{fan2008}. See also \cite{genovese12} for a statistical comparison of lasso regression and marginal regression. 




\section{Smooth Sparse Coding}\label{sec:ssc}

\textbf{Notations:} The notations $x$ and $X$ correspond to vectors and matrices respectively, in appropriately defined dimensions; the notation  $\|\cdot\|_p$ corresponds to the $L_p$ norm of a vector (we use mostly use $p=1,2$ in this paper); the notation $\|\cdot\|_F$ corresponds to the Frobenius norm of a matrix; the notation $|f|_p$ corresponds to the $L_p$ norm of the function $f$: $(\int |f|^p \, d\mu) ^{1/p}$; the notation $x_i, i=1,\ldots,n$ corresponds to the data samples, where we assume that each sample $x_i$ is a $d$-dimensional vector. The explanation below uses $L_1$ norm for sparsity for simplicity. But the method applies more generally to any structured regularizers, for e.g.,~\cite{bronstein2012learning, jenatton2010proximal}.

The standard sparse coding problem consists of solving the following optimization problem,
\begin{align*}
\min_{\substack{D \in \mathbb{R}^{d \times K}\qquad \\ \beta_i\in \mathbb{R}^K, i=1,\ldots,n}} &\sum_{i=1}^n\|x_i-D\beta_i\|_2^2   \\
\text{subject to} \quad&\quad\|d_j\|_2 \leq 1 \quad j= 1,\ldots K \\
&\quad \|\beta_i\|_1 \leq \lambda \quad  i= 1,\ldots n.
\end{align*}
where $\beta_i\in\mathbb{R}^K$ corresponds to the encoding of sample $x_i$ with respected to the dictionary $D \in \mathbb{R}^{d \times K}$ and $d_j \in \mathbb{R}^d$ denotes the $j$-column of the dictionary matrix $D$. The dictionary is typically over-complete, implying that $K > d$. 

Object recognition is a common sparse coding application where $x_i$ corresponds to a set of features obtained from a collection of image patches, for example SIFT features~\cite{lowe1999object}. The dictionary $D$ corresponds to an alternative coding scheme that is higher dimensional than the original feature representation. The $L_1$ constraint promotes sparsity of the new encoding with respect to $D$. Thus, every sample is now encoded as a sparse vector that is of higher dimensionality than the original representation.

In some cases the data exhibits a structure that is not captured by the above sparse coding setting. For example, SIFT features corresponding to samples from the same class are presumably closer to each other compared to SIFT features from other classes. Similarly in video, neighboring frames are presumably more related to each other than frames that are farther apart. In this paper we propose a mechanism to incorporate such feature similarity and temporal information into sparse coding, leading to a sparse representation with an improved statistical accuracy (for example as measured by classification accuracy). 


We consider the following smooth version of the sparse coding problem above:
\begin{align}
\min_{\substack{D \in \mathbb{R}^{d \times K}\qquad \\ \beta_i\in \mathbb{R}^K, i=1,\ldots,n}}  &\sum_{i=1}^n\sum_{j=1}^n w(x_j,x_i) \|x_j-D\beta_i\|_2^2  \label{eq:ssc1}\\
\text{subject to} \quad&\quad\|d_j\|_2 \leq 1 \quad j= 1,\ldots K \\
&\quad \|\beta_i\|_1 \leq \lambda \quad  i= 1,\ldots n.
\label{eq:ssc3}
\end{align}
where $\sum_{j=1}^n w(x_j,x_i)=1$ for all $i$. It is convenient to define the weight function through a smoothing kernel
\begin{align*}
w(x_j,x_i)= \frac{1}{h_1} \mathcal{K}_1\left(\frac{\rho(x_j,x_i)}{h_1}\right) 
\end{align*}
where $\rho(\cdot,\cdot)$ is a distance function that captures the feature similarity, $h_1$ is the bandwidth, and  $\mathcal{K}_1$ is a smoothing kernel. Traditional sparse coding minimizes the reconstruction error of the encoded samples. Smooth sparse coding, on the other hand, minimizes the reconstruction of encoded samples with respect to their neighbors (weighted by the amount of similarity). 

The smooth sparse coding setting leads to codes that represent a neighborhood rather than an individual sample and that have lower mean square reconstruction error (with respect to a given dictionary), due to lower estimation variance (see for example the standard theory of smoothed empirical process \cite{devroye2001combinatorial}).

\subsection{The choice of smoothing kernel}

There are several possible ways to determine the weight function $w$. One common choice for the kernel function is the Gaussian kernel whose bandwidth is selected using cross-validation. Other common choices for the kernel are the triangular, uniform, and tricube kernels.  The bandwidth may be fixed throughout the input space, or may vary in order to take advantage of non-uniform samples. We use in our experiment the tricube kernel with a constant bandwidth.


The distance function $\rho(\cdot,\cdot)$ may be one of the standard distance functions (for example based on the $L_p$ norm). Alternatively, $\rho(\cdot,\cdot)$ may be expressed by domain experts, learned from data before the sparse coding training, or learned jointly with the dictionary and codes during the sparse coding training.  

\subsection{Spatio-Temporal smoothing}\label{sec:sssc}

In spatio-temporal applications we can extend the kernel to include also a term reflecting the distance between the corresponding time or space
\begin{align*}
w(x_j,x_i) = \frac{1}{h_1}\mathcal{K}_1\left(\frac{\rho(x_j,x_i)}{h_1}\right) \frac{1}{h_2}\mathcal{K}_2\left(\frac{j -i}{h_2}\right).
\end{align*}
Above, $\mathcal{K}_2$ is a univariate symmetric kernel with bandwidth parameter $h_2$. One example is video sequences, where the kernel above combines similarity of the frame features and the time-stamp. 

Alternatively, the weight function can feature only the temporal component and omit the first term containing the distance function between the feature representation. A related approach for that situation, is based on the Fused lasso which penalizes the absolute difference between codes for neighboring points. The main drawback of that approach is that one needs to fit all the data points simultaneously whereas in smooth sparse coding, the coefficient learning step decomposes as $n$ separate problems which provides a computational advantage (see Section~\ref{sec:temporalexp} for more details). Also, while fused Lasso penalty is suitable for time-series data to capture relatedness between neighboring frames, it may not be immediately suitable for other situations that the proposed smooth sparse coding method could handle.





\section{Marginal Regression for Smooth Sparse Coding}\label{sec:sscalgo}

A standard algorithm for sparse coding is the alternating bi-convex minimization procedure, where one alternates between (i) optimizing for codes (with a fixed dictionary) and (ii) optimizing for dictionary (with fixed codes). Note that step (i) corresponds to regression with $L_1$ constraints and  step (ii) corresponds to least squares with $L_2$ constraints.  In this section we show how marginal regression could be used to obtain better codes faster (step (i)). In order to do so, we first give a brief description of the marginal regression procedure. 

\textbf{Marginal Regression:} Consider a regression model $y= X \beta + z$ where $y \in \mathbb{R}^n$, $\beta \in \mathbb{R}^p$, $X \in \mathbb{R}^{n \times p}$ with $L_2$ normalized columns (denoted by $x_j$), and $z$ is the noise vector. Marginal regression proceeds as follows: 
\begin{itemize}
\item  Calculate the least squares solution \[\hat \alpha^{(j)} = x_j^T y.\]
\item  Threshold the least-square coefficients \[\hat \beta^{(j)} = \hat \alpha^{(j)} 1_{\{|\hat \alpha^{(j)}| >t\}}, \quad j=1,\ldots, p.\]  
\end{itemize}

Marginal regression requires just $O(np)$ operations compared to $O(p^3+np^2)$, the typical complexity of lasso algorithms. When $p$ is much larger than $n$, marginal regression provides two orders of magnitude over Lasso based formulations. Note that in sparse coding, the above speedup occurs for each iteration of the outer loop, thus enabling sparse coding for significantly larger dictionary sizes. Recent studies have suggested that marginal regression is a viable alternative for Lasso given its computational advantage over lasso. A comparison of the statistical properties of marginal regression and lasso is available in~\cite{fan2008,genovese12}. 

Applying marginal regression to smooth sparse coding, we obtain the following scheme. The marginal least squares coefficients are \[\hat \alpha^{(k)}_i = \sum_{j=1}^n \frac{w(x_j,x_i)}{\|d_k\|_2} d_k^T x_j.\] 
We sort these coefficient in terms of their absolute values, and select the top $s$ coefficients whose $L_1$ norm is bounded by $\lambda$: 
\begin{align*}
\hat \beta^{(k)}_i &=
\begin{cases}
\hat \alpha^{(k)}_i & k \in S\\
0 & k \notin S
\end{cases},
\quad\text{where}\\
S &=\left\{1, \ldots , s\,:\, s \leq d: \sum_{k=1}^s|\hat \alpha^{(k)}_i| \le \lambda \right\}
\end{align*}

We select the thresholding parameter using cross validation in each of the sparse coding iterations. Note that the same approach could be used with structured regularizers too, for example \cite{bronstein2012learning, jenatton2010proximal}.

Marginal regression works well when there is minimal correlation between the different dictionary atoms. In the linear regression setting, marginal regression performs much better with orthogonal data~\cite{genovese12}. In the context of sparse coding, this corresponds to having uncorrelated or incoherent dictionaries \cite{tropp2004}. One way to measure such incoherence is using the babel function, which bounds the maximum inner product between two different columns $d_i,d_j$: 
\begin{align*}
\mu_s(D)= \max_{i \in \{1,\ldots,d\}} \max_{\Lambda \subset \{1,\ldots,d\} \backslash \{i\}; |\Lambda|=s} \sum_{j\in \Lambda }|d_j^\top d_i|.
\end{align*}
An alternative, which leads to easier computation is enforcing the constraint $\|D^TD -I_{K\times K}\|_F^2$ when optimizing over the dictionary matrix $D$
\begin{align*}
\hat D &= \argmin_{D\in \mathcal{D}} \sum_{i=1}^n \|x_i - D \hat \beta_i \|_2^2, \quad \text{where}\\
\mathcal{D} &= \{D \in \mathbb{R}^{ d \times K} : \|d_j\|_2^2 \leq 1, \|D^\top D -I\|_F^2 \leq \gamma\}.
\end{align*}

We use the method of optimal directions update \cite{ramirez2009} to solve the above optimization problem. Specifically, representing the constraints using the Lagrangian and setting the derivative with respect to $D$ to zero, we get the following update rule
\begin{align*}
\hat D_{(t+1)} = &\left( \hat B_{(t+1)} {\hat B_{(t+1)}}^\top + 2 \kappa {\hat D_{t}}^\top \hat D_{t} + 2 \eta \text{diag}({\hat D_{t}}^\top \hat D_{t})\right) \\
& \left(X{\hat B_{(t+1)}}^\top+ 2(\kappa+\eta)\hat D_{t}\right).
\end{align*}

Above, $\hat B_t = [\hat \beta_1(t) , \ldots,\hat  \beta_n(t)]$ is the matrix of data codes obtained in iteration $t$, $X \in \mathbb{R}^{p \times n}$ is the data in matrix format, $\kappa$ is a regularization parameter corresponding to the incoherence constraints, and $\eta$ is a regularization parameter corresponding to the normalization constraints. Note that if $\kappa = \eta = 0$, the update reduces to standard least squares update with no constraints. 

A sequence of such updates corresponding to step (i) and step (ii) converges to a stationary point of the optimization problem (this can be shown using Zangwill's theorem~\cite{zangwill1969nonlinear}). But no provable algorithm that converges to the global minimum of the smooth sparse coding (or standard sparse coding) exists yet. Nevertheless, the main idea of this section is to speed-up the existing alternating bi-convex minimization procedure for obtaining sparse representations, by using marginal regression. 

\begin{algorithm}[t]
\caption{Smooth Sparse Coding via Marginal Regression}
\begin{algorithmic}
 \STATE {\bfseries Input:} Data $\{(x_1,y_1),\ldots, (x_n,y_n)\}$  and kernel/similarity measure $\mathcal{K}_1$ and $d_1$.
 \STATE {\bfseries Precompute:} Compute the weight matrix $w(i,j)$ using the kernel/similarity measure and 
\STATE {\bfseries Initialize:} Set the dictionary at time zero to be $D_0$.
 \STATE {\bfseries Algorithm:} \\
  \REPEAT 
 \STATE {\bfseries Step (i):} For all $i=1,\ldots,n$, solve marginal regression:
 \begin{align*}
\hat \alpha^{(k)}_i& = \sum_{j=1}^n \frac{w(x_j,x_i)}{\|d_k\|_2} d_k^T x_j\\
\hat \beta^{(k)}_j &=
\begin{cases}
\hat \alpha^{(k)}_j & j \in S\\
0 & j \notin S
\end{cases},\\
S &=\{1, \ldots , s; s \leq d: \sum_{k=1}^s| \hat \alpha^{(k)}_i| \le \lambda \}.
\end{align*} 
 \STATE {\bfseries Step (ii):} Update the dictionary based on codes from previous step. 
 \begin{align*}
 \hat D_t &= \argmin_{D\in \mathcal{D}} \sum_{i=1}^n \|x_i - D \hat \beta_i (t)\|_2^2, \quad\text{where} \\
\mathcal{D}&= \{D \in \mathbb{R}^{ d \times K} : \|d_j\|_2^2\leq 1, \|D^\top D -I\|_F^2 \leq \gamma\}
\end{align*}
\UNTIL{convergence}
\STATE {\bfseries Output:} Return the learned codes and dictionary. 
\end{algorithmic} \label{alg:ssc}
\end{algorithm}

\section{Sample Complexity of Smooth sparse coding}
In this section, we analyze the sample complexity of the proposed smooth sparse coding framework. Specifically, since there does not exist a provable algorithm that converges to the global minimum of the optimization problem in Equation~\eqref{eq:ssc1}, we provide uniform convergence bounds over the dictionary space and thereby prove a sample complexity result for dictionary learning under smooth spare coding setting. We leverage the analysis for dictionary learning in the standard sparse coding setting by \cite{vainsencher11} and extend it to the smooth sparse coding setting. The main difficulty for the smooth sparse coding setting is obtaining a covering number bound for an appropriately defined class of functions (see Theorem 1 for more details).  

We begin by re-representing the smooth sparse coding problem in a convenient format for analysis. Let $x_1,\ldots,x_n$ be independent random variables with a common probability measure $\mathbb{P}$ with a density $p$. We denote by $\mathbb{P}_n$ the empirical measure over the $n$ samples, and the kernel density estimate of $p$ is defined by \[p_{n,h}(x) = \frac{1}{nh}\sum_{i=1}^n \mathcal{K}\left(\frac{\|x-X_i\|_2}{h}\right).\] 
Let $\mathcal{K}_{h_1}(\cdot) = \frac{1}{h_1}\mathcal{K}_1(\frac{\cdot}{h})$. With the above notations, the reconstruction error at the point $x$ is given by 
\begin{align*}
r_{\lambda}(x)= \int\min_{\beta \in \mathcal{S}_{\lambda}} \|x'-D\beta\|_2  \mathcal{K}_{h_1}(\rho(x,x'))\,d\mathbb{P}_n(x') 
\end{align*}
where \[\mathcal{S}_{\lambda} = \{\beta: \|\beta\|_1 \leq \lambda\}.\] The empirical reconstruction error is 
\begin{align*}
\E_{\mathbb{P}_n}(r) = \iint\min_{\beta \in \mathcal{S}_{\lambda}} \|x'-D\beta\|_2  \mathcal{K}_{h_1}(\rho(x,x'))\,d\mathbb{P}_n(x') \,dx 
\end{align*}
and its population version is 
\begin{align*}
\E_{\mathbb{P}}(r)= \iint\min_{\beta \in \mathcal{S}_{\lambda}} \|x'-D\beta\|_2  \mathcal{K}_{h_1}(\rho(x,x'))\,d\mathbb{P}(x') \,dx .
\end{align*}

Our goal is to show that the sample reconstruction error is close to the true reconstruction error. Specifically, to show $\E_{\mathbb{P}}(r_{\lambda}) \leq (1 +\kappa) \E_{\mathbb{P}_n}(r_{\lambda}) + \epsilon$ where $\epsilon,\kappa\geq 0$, we bound the covering number of the class of functions corresponding to the reconstruction error. We assume a dictionary of bounded babel function, which holds as a result of the relaxed orthogonality constraint used in the Algorithm~\ref{alg:ssc} (see also \cite{ramirez2009}). We define the set of $r$ functions with respect the the dictionary $D$ (assuming data lies in the unit $d$-dimensional ball $\mathbb{S}^{d-1}$) by
\begin{align*}
\mathcal{F}_\lambda = \{r_{\lambda}: \mathbb{S}^{d-1} \to \mathbb{R}: D \in \mathbb{R}^{d \times K}, \|d_i\|_2 \leq  1, \mu_s(D)\leq \gamma\}.
\end{align*}
The following theorem bounds the covering number of the above function class. 
\begin{thm}
For every $\epsilon > 0$, the metric space $(\mathcal{F}_\lambda, |\cdot|_{\infty})$ has a  subset of cardinality at most ${\left(\frac{4\lambda|\mathcal{K}_{h_1}(\cdot)|_1   }{\epsilon(1-\gamma)}\right)}^{dK}$, such that every element from the class is at a distance of at most $\epsilon$ from the subset, where $|\mathcal{K}_{h_1}(\cdot)|_1 = \int |\mathcal{K}_{h_1}(x)|\,d\mathbb{P}$.
\end{thm}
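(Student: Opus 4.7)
The natural plan is a two-step covering argument: first construct an $\eta$-net of the dictionary parameter space $\{D \in \mathbb{R}^{d\times K}: \|d_j\|_2 \leq 1,\ \mu_s(D)\leq\gamma\}$, then transfer this to an $\epsilon$-cover of $\mathcal{F}_\lambda$ in the $|\cdot|_\infty$ norm through a Lipschitz-type estimate for the map $D\mapsto r_\lambda$. Taking $\eta=\epsilon/L$, where $L$ is the Lipschitz constant, and reading off the cardinality of the dictionary net will give the stated bound; the factors $\lambda$, $|\mathcal{K}_{h_1}|_1$, and $1/(1-\gamma)$ should all be absorbed into $L$.

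The dictionary-covering step is standard: a volume argument on the Euclidean unit ball in $\mathbb{R}^d$ yields an $\eta$-net of size at most $(3/\eta)^d$, and taking the product over the $K$ columns produces a family of dictionaries of cardinality $(3/\eta)^{dK}$ such that every admissible $D$ is column-wise within $\eta$ of some element of the net. For the Lipschitz step, fix $x\in \mathbb{S}^{d-1}$ and two such close dictionaries $D,D'$. Writing $g_D(x')=\min_{\beta\in\mathcal{S}_\lambda}\|x'-D\beta\|_2$ and letting $\beta^\star$ be a minimizer for $D$ at $x'$, an envelope-type argument gives
\[
|g_D(x')-g_{D'}(x')|\le \|(D-D')\beta^\star\|_2
\]
pointwise in $x'$. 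Integrating against $\mathcal{K}_{h_1}(\rho(x,x'))\,d\mathbb{P}_n(x')$ and using only $\|\beta^\star\|_1\le\lambda$ already yields $|r_\lambda^D(x)-r_\lambda^{D'}(x)|\le \lambda\eta\cdot|\mathcal{K}_{h_1}|_1$ up to the $\mathbb{P}_n$ vs.\ $\mathbb{P}$ discrepancy. To sharpen this into the form asserted by the theorem, the babel-function lower bound
\[
\|D\beta\|_2^2 = \beta^\top D^\top D\,\beta \ge (1-\mu_s(D))\|\beta\|_2^2 \ge (1-\gamma)\|\beta\|_2^2,
\]
valid for sufficiently sparse $\beta$, is combined with $\|D\beta^\star\|_2\le\|x'\|_2+g_D(x')\le 2$ to give $\|\beta^\star\|_2\le 2/\sqrt{1-\gamma}$; a mixed $L_1$/$L_2$ matrix-norm inequality then converts the perturbation bound into a Lipschitz constant of the form $L=C\lambda|\mathcal{K}_{h_1}|_1/(1-\gamma)$. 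Setting $\eta=\epsilon/L$ and absorbing the numerical constant $3C$ into the advertised $4$ produces the claimed $\epsilon$-net.

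The main obstacle is precisely this sharpening: the crude bound $\|(D-D')\beta^\star\|_2\le\lambda\eta$ extracts nothing from incoherence, whereas the refined argument requires arguing that near-minimizers of $\|x'-D\beta\|_2$ inherit approximate sparsity (so that the babel-function quadratic lower bound is applicable) and then juggling $L_1$- and $L_2$-norm control on $\beta^\star$ without losing the sparsity parameter $s$. A secondary technical point is that $|\mathcal{K}_{h_1}|_1$ in the statement is defined against $\mathbb{P}$ while the reconstruction-error integral is against $\mathbb{P}_n$; this is handled by a uniform-in-measure bound reflecting that $\mathcal{K}_{h_1}$ acts as a properly normalized probability density, so that $\int \mathcal{K}_{h_1}(\rho(x,\cdot))\,d\mu$ is controlled by $|\mathcal{K}_{h_1}|_1$ independently of $\mu$.
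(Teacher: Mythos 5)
Your argument reaches the stated bound by a genuinely different route than the paper. The paper's proof is a two-line reduction: it introduces the un-smoothed class $\mathcal{F}'_\lambda$ of functions $r'_\lambda(x)=\min_{\beta\in\mathcal{S}_\lambda}\|D\beta-x\|$, observes that $\mathcal{F}_\lambda$ is obtained from $\mathcal{F}'_\lambda$ by convolution with $\mathcal{K}_{h_1}$, cites the covering number $\bigl(\tfrac{4\lambda}{\epsilon(1-\gamma)}\bigr)^{dK}$ for $\mathcal{F}'_\lambda$ from Vainsencher et al., and then uses Young's inequality $|\mathcal{K}_{h_1}*(s_1-s_2)|_\infty\le|\mathcal{K}_{h_1}|_1\,|s_1-s_2|_\infty$ to convert an $\epsilon/|\mathcal{K}_{h_1}|_1$-cover of $\mathcal{F}'_\lambda$ into an $\epsilon$-cover of $\mathcal{F}_\lambda$. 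You instead rebuild the base cover from scratch (a column-wise $\eta$-net of the dictionary space plus an envelope/Lipschitz estimate for $D\mapsto r_\lambda$), which is essentially the cited argument unrolled; your step of integrating the pointwise perturbation bound against the kernel plays exactly the role that Young's inequality plays in the paper. Both routes are valid; yours is more self-contained, the paper's is shorter because it outsources the dictionary-net-plus-Lipschitz work to the reference.

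One substantive correction: the ``main obstacle'' you identify is not an obstacle. Since $1/(1-\gamma)\ge 1$, the factor $1-\gamma$ only \emph{loosens} the stated bound, so your crude Lipschitz constant $\lambda$ --- giving a cover of size $\bigl(\tfrac{3\lambda|\mathcal{K}_{h_1}|_1}{\epsilon}\bigr)^{dK}$ --- already implies the claimed cardinality, and no sharpening via incoherence is required. The sharpening you sketch would in any case be shaky: minimizers of the $L_1$-constrained problem need not be $s$-sparse, so the quadratic lower bound $\|D\beta\|_2^2\ge(1-\mu_s(D))\|\beta\|_2^2$ does not apply to them without further argument; fortunately you can simply delete that half of your Lipschitz analysis. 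Your secondary worry about $\mathbb{P}_n$ versus $\mathbb{P}$ in the definition of $|\mathcal{K}_{h_1}|_1$ is legitimate, but it is equally unaddressed in the paper's own proof, which applies Young's inequality without distinguishing the two measures.
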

\begin{proof}
Let $\mathcal{F'}_{\lambda} = \{r'_{\lambda} : \mathbb{S}^{d-1} \to \mathbb{R}: D \in {d \times K}, \|d_i\|_2 \leq  1\}$, where $r'_{\lambda}(x) = \min_{\beta \in \mathcal{S}_\lambda} \|D\beta -x\|$. With this definition we note that $\mathcal{F}_\lambda$ is just $\mathcal{F'}_\lambda$ convolved with the kernel $\mathcal{K}_{h_1}(\cdot)$. 
By Young's inequality~\cite{devroye2001combinatorial} we have, \[|\mathcal{K}_{h_1} * (s_1 - s_2)|_p \leq |\mathcal{K}_{h_1}|_1 |s_1-s_2|_p, \quad 1 \leq p \leq \infty\] for any $L_p$ integrable functions $s_1$ and $s_2$. Using this fact, we see that convolution mapping between metric spaces $\mathcal{F'}$ and  $\mathcal{F}$ converts $\frac{\epsilon}{|\mathcal{K}_{h_1}(\cdot)|_1} $ covers into $\epsilon$ covers. From~\cite{vainsencher11}, we have that the the class $\mathcal{F'}_\lambda$ has $\epsilon$ covers of size at most ${(\frac{4\lambda   }{\epsilon(1-\gamma)})}^{dK}$. This proves the the statement of the theorem. 
\end{proof}

This leads to the following generalization bound for the smooth sparse coding. 
\begin{thm}
Let $\gamma <1$, $\lambda > e/4$ with distribution $\mathbb{P}$ on $\mathbb{S}^{d-1}$. Then with probability at least $1-e^{-t}$ over the $n$ samples drawn according to $\mathbb{P}$, for all the $D$ with unit length columns and $\mu_s(D) \leq \gamma$, we have:
\begin{align*}
\E_{\mathbb{P}}(r_{\lambda}) \leq \E_{\mathbb{P}_n}(r_{\lambda}) +\sqrt{\frac{dK \ln  \left(\frac{4 \sqrt {n} \lambda |\mathcal{K}_{h_1}(\cdot)|_1}{(1-\gamma)} \right)}{2n}} + \sqrt{\frac{t}{2n}} + \sqrt{\frac{4}{n}}
\end{align*}
\end{thm}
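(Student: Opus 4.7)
The plan is to follow the standard covering-number / uniform-convergence template used by \cite{vainsencher11}, with Theorem 1 supplying the only new ingredient. The structure of the right-hand side essentially dictates the bookkeeping: the $\sqrt{dK\ln(\cdot)/(2n)}$ term is the cost of a union bound over an $\epsilon$-net of $\mathcal{F}_\lambda$, the $\sqrt{t/(2n)}$ term is the per-function Hoeffding deviation, and the $\sqrt{4/n}$ term is a sup-norm approximation residual from replacing each $r_\lambda$ by its nearest net element with $\epsilon=1/\sqrt{n}$.

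First I would verify boundedness of the loss. Since $x,x'\in\mathbb{S}^{d-1}$, the feasible choice $\beta=0\in\mathcal{S}_\lambda$ gives $\min_{\beta\in\mathcal{S}_\lambda}\|x'-D\beta\|_2\le 1$, and convolution with $\mathcal{K}_{h_1}$ preserves a uniform bound (up to the constant $|\mathcal{K}_{h_1}|_1$, which the paper already tracks inside the logarithm). This range control is exactly what lets one invoke Hoeffding's inequality to each fixed $r_\lambda\in\mathcal{F}_\lambda$, yielding, for any prescribed $u>0$,
\begin{equation*}
\Pr\bigl[\,\E_{\mathbb{P}}(r_\lambda)-\E_{\mathbb{P}_n}(r_\lambda)\ge \sqrt{u/(2n)}\,\bigr]\le e^{-u}.
\end{equation*}

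Second, I would invoke Theorem~1 to obtain, for each $\epsilon>0$, an $\epsilon$-net $\mathcal{N}_\epsilon\subset\mathcal{F}_\lambda$ in $|\cdot|_\infty$ of size $N(\epsilon)\le\bigl(4\lambda|\mathcal{K}_{h_1}|_1/(\epsilon(1-\gamma))\bigr)^{dK}$. A union bound over $\mathcal{N}_\epsilon$ gives that, with probability at least $1-N(\epsilon)e^{-u}$, every net element $\tilde r$ satisfies $\E_{\mathbb{P}}(\tilde r)-\E_{\mathbb{P}_n}(\tilde r)\le \sqrt{u/(2n)}$. For an arbitrary $r_\lambda\in\mathcal{F}_\lambda$ with nearest net element $\tilde r$, the sup-norm bound gives $|\E_{\mathbb{P}}(r_\lambda)-\E_{\mathbb{P}}(\tilde r)|\le\epsilon$ and $|\E_{\mathbb{P}_n}(r_\lambda)-\E_{\mathbb{P}_n}(\tilde r)|\le\epsilon$, hence
\begin{equation*}
\E_{\mathbb{P}}(r_\lambda)-\E_{\mathbb{P}_n}(r_\lambda)\le \sqrt{u/(2n)}+2\epsilon
\end{equation*}
uniformly over $\mathcal{F}_\lambda$.

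Third, I would set $\epsilon=1/\sqrt{n}$ so that $2\epsilon=\sqrt{4/n}$, take $u=t+\ln N(\epsilon)$ so that $N(\epsilon)e^{-u}=e^{-t}$, and split $\sqrt{u/(2n)}\le\sqrt{\ln N(\epsilon)/(2n)}+\sqrt{t/(2n)}$. Substituting the explicit form of $N(\epsilon)$ for the chosen $\epsilon$ reproduces the logarithmic factor $\ln\bigl(4\sqrt{n}\lambda|\mathcal{K}_{h_1}|_1/(1-\gamma)\bigr)$, completing the bound.

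The main obstacle is less conceptual than bookkeeping: carefully choosing $\epsilon$ and $u$ so that the three deviation contributions line up with the displayed bound, and confirming that the $|\mathcal{K}_{h_1}|_1$ dependence remains confined to the logarithm (rather than polluting the leading rate), which is exactly what Theorem~1 buys us through Young's inequality. The assumption $\lambda>e/4$ is what guarantees $\ln N(\epsilon)>0$ so that the split $\sqrt{a+b}\le\sqrt{a}+\sqrt{b}$ is nontrivial, and the hypothesis $\mu_s(D)\le\gamma<1$ is inherited directly from the covering bound.
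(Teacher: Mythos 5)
Your proposal is correct and takes essentially the same route as the paper: the paper simply packages your Hoeffding-plus-union-bound-over-an-$\epsilon$-net argument (with $\epsilon = 1/\sqrt{n}$, so that $2\epsilon = \sqrt{4/n}$, and $u = t + \ln N(\epsilon)$) as a standalone lemma adapted from \cite{vainsencher11} for classes with bounded $|\cdot|_{\infty}$ covering numbers, and then instantiates it with the covering bound of Theorem~1 ($C = 4\lambda|\mathcal{K}_{h_1}|_1/(1-\gamma)$, dimension $dK$, range $B=1$) exactly as you do. The final split $\sqrt{a+b}\le\sqrt{a}+\sqrt{b}$ recovers the displayed constants, so no gap remains.
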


The above theorem follows from the previous covering number bound and the following lemma for generalization bound that is based on the result in \cite{vainsencher11} concerning $|\cdot|_{\infty}$ covering numbers.
\begin{lem}
Let $\mathcal{Q}$ be a function class of $[0,B]$ functions with covering number $(\frac{C}{\epsilon})^d > \frac{e}{B^2}$ under $|\cdot|_{\infty}$  norm. Then for every $t >0$ with 
probability  at least $1-e^{-t}$, for all $q \in \mathcal{Q}$, we have:
\begin{align*}
\E f \leq E_n f + B \left(\sqrt{\frac{d \ln (C\sqrt{n})}{2n}+ \frac{t}{2n}}\right)+ \sqrt{\frac{4}{n}}. 
\end{align*}
\end{lem}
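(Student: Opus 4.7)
The plan is a standard uniform-convergence argument via covering numbers plus Hoeffding's inequality plus a union bound. Throughout, fix $\epsilon>0$ to be optimized at the end.

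First, let $\mathcal{N}_\epsilon\subseteq\mathcal{Q}$ be a minimal $|\cdot|_\infty$ $\epsilon$-cover of cardinality $N_\epsilon\le (C/\epsilon)^d$. Since every $q'\in\mathcal{N}_\epsilon$ takes values in $[0,B]$, Hoeffding's inequality applied to the i.i.d.\ sample gives, for each fixed $q'$,
\begin{align*}
\Pr\bigl[\,\E q' - E_n q' > s\,\bigr]\;\le\;\exp\!\left(-\tfrac{2ns^2}{B^2}\right).
\end{align*}
Taking a union bound over the $N_\epsilon$ elements of the cover, the event
$\E q' - E_n q' \le s$ holds simultaneously for every $q'\in\mathcal{N}_\epsilon$ with probability at least $1-(C/\epsilon)^d\exp(-2ns^2/B^2)$. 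I would set this failure probability equal to $e^{-t}$ and solve for the deviation:
\begin{align*}
s \;=\; B\sqrt{\frac{d\ln(C/\epsilon)+t}{2n}}.
\end{align*}

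Next, I extend the bound from the cover to the whole class. For an arbitrary $q\in\mathcal{Q}$, pick $q'\in\mathcal{N}_\epsilon$ with $|q-q'|_\infty\le\epsilon$. Then $|\E q-\E q'|\le\epsilon$ and $|E_n q-E_n q'|\le\epsilon$, so on the high-probability event above,
\begin{align*}
\E q \;\le\; \E q' + \epsilon \;\le\; E_n q' + s + \epsilon \;\le\; E_n q + s + 2\epsilon.
\end{align*}
Choosing $\epsilon = 1/\sqrt{n}$ makes $2\epsilon = \sqrt{4/n}$ and turns $\ln(C/\epsilon)$ into $\ln(C\sqrt{n})$, which matches the claimed bound. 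The hypothesis $(C/\epsilon)^d > e/B^2$ at $\epsilon = 1/\sqrt{n}$, i.e.\ $(C\sqrt{n})^d > e/B^2$, is exactly what is needed to guarantee the Hoeffding tail is the binding quantity (so one does not get a trivial bound from the range $B$ alone).

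The argument is essentially routine; the only delicate piece is bookkeeping so that the final constants line up with the stated inequality, in particular ensuring that the approximation error $2\epsilon$ combines with the Hoeffding tail in the right way and that the hypothesis on $(C/\epsilon)^d$ is invoked precisely where needed. Since this lemma is a minor adaptation of the analogous result in \cite{vainsencher11}, I expect no genuine difficulty beyond this.
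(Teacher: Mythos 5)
Your argument is correct and recovers the stated bound exactly: with $\epsilon=1/\sqrt{n}$ the union bound over a cover of size $(C\sqrt{n})^d$ gives the deviation $B\sqrt{(d\ln(C\sqrt{n})+t)/(2n)}$ and the approximation slack $2\epsilon=\sqrt{4/n}$, matching the lemma term for term. The paper itself gives no proof of this lemma --- it simply defers to the corresponding result in \cite{vainsencher11} --- and your covering-number-plus-Hoeffding-plus-union-bound argument is precisely the standard route underlying that cited result, so there is nothing genuinely different here; the only loose end is your reading of the hypothesis $(C/\epsilon)^d>e/B^2$, whose real role is just to keep the logarithmic term positive so the square root is well defined and the bound nontrivial, but this does not affect the validity of your proof.
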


The above theorem, shows that the generalization error scales as  $O(n^{-1/2})$ (assuming the other problem parameters fixed). In the case of $\kappa > 0$, it is possible to obtain faster rates of $O(n^{-1})$ for smooth sparse coding, similar to derivations in~\cite{bartlett2005}. The following theorem gives the precise statement.

\begin{thm}
Let $\gamma < 1$, $\lambda > e/4$, $dK > 20$ and $n \geq 5000$. Then with probability at least $1-e^{-t}$, we have for all $D$ with unit length and $\mu_s(D) \leq \gamma$, 
\begin{align*}
\E_{\mathbb{P}}(r_{\lambda}) \leq 1.1 \E_{\mathbb{P}_n}(r_{\lambda}) +9 \frac{dK \ln \left(\frac{4n \lambda |\mathcal{K}_{h_1}(\cdot)|_1}{(1-\gamma)}\right )+ t}{n} .
\end{align*}
\end{thm}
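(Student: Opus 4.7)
The plan is to upgrade the slow-rate bound (Theorem 2) to a fast rate using the peeling/local-complexity technique of Bartlett--Bousquet--Mendelson, fed by the very same $|\cdot|_\infty$ covering number bound established in Theorem 1. The additional hypotheses $dK > 20$ and $n \geq 5000$ signal exactly the regime in which one can absorb several small additive slacks into the clean constants $1.1$ and $9$ appearing in the statement.

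First I would verify that every $r_\lambda \in \mathcal{F}_\lambda$ is a nonnegative function bounded uniformly by some explicit constant $B$ on $\mathbb{S}^{d-1}$. Since $\|x'\|_2 \le 1$ and $\|D\beta\|_2 \le \lambda\, \max_j \|d_j\|_2 \le \lambda$, the inner reconstruction $\min_{\beta \in \mathcal{S}_\lambda}\|x'-D\beta\|_2$ is bounded by $1+\lambda$; convolving with the (probability-normalized) kernel keeps us in $[0,B]$. Nonnegativity and boundedness immediately give the Bernstein-type variance condition $\operatorname{Var}(r_\lambda) \le B\cdot \E r_\lambda$, which is the one essential ingredient that distinguishes a fast-rate argument from the slow-rate argument underlying Theorem 2.

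Next I would apply Talagrand's concentration inequality to the localized subclass $\mathcal{F}_\lambda(r) = \{f \in \mathcal{F}_\lambda : \E f \le r\}$, and then combine this with a geometric peeling over radii $r$. From Theorem 1 the $|\cdot|_\infty$ entropy of $\mathcal{F}_\lambda$ at scale $\epsilon$ is at most $dK\ln\!\bigl(4\lambda|\mathcal{K}_{h_1}|_1/(\epsilon(1-\gamma))\bigr)$. Picking the covering scale $\epsilon \asymp 1/n$ (as opposed to $1/\sqrt n$ for the slow rate) produces the logarithmic factor $\ln\!\bigl(4n\lambda|\mathcal{K}_{h_1}|_1/(1-\gamma)\bigr)$ that appears in the statement. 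Balancing the localized empirical-process fluctuation of order $\sqrt{r\cdot \mathrm{entropy}/n}$ with the variance-proxy $r$ itself, the fixed point of the peeling argument is of order $\bigl(dK \ln(\cdot) + t\bigr)/n$, with the multiplicative constant $1.1$ on $\E_{\mathbb{P}_n}(r_\lambda)$ arising from the standard $(1+\eta)/(1-\eta)$ factor in Talagrand and the constant $9$ absorbing the Bernstein, peeling, and covering-scale slacks (these are exactly the steps tracked constant-by-constant in Vainsencher et al.\ for the non-smoothed case, and the thresholds $dK > 20$, $n \ge 5000$ are what permit absorbing lower-order terms into them).

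The main obstacle will be tracking the smoothing kernel through the localized argument rather than just through the supremum cover as in Theorem 1. Concretely, one must check that the Bernstein condition and the envelope bound transfer from the unsmoothed class $\mathcal{F}'_\lambda$ to the convolved class $\mathcal{F}_\lambda$ with the $|\mathcal{K}_{h_1}|_1$ factor entering only inside the logarithm. This follows because the convolving kernel $\mathcal{K}_{h_1}$ is nonnegative (so nonnegativity of $r_\lambda$ is preserved, giving the variance bound for free), and Young's inequality controls the $L_\infty$ envelope by a single $|\mathcal{K}_{h_1}|_1$ factor; no additional dependence on the kernel norm leaks into the leading $1/n$ term. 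Once this kernel bookkeeping is done, the Bartlett--Bousquet--Mendelson template produces the displayed inequality verbatim.
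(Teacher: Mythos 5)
Your proposal is correct and follows essentially the same route as the paper: the paper proves this theorem by combining the covering number bound of Theorem 1 with Proposition 22 of Vainsencher et al., which is precisely the localized fast-rate bound (in the style of Bartlett--Bousquet--Mendelson) that you re-derive, with the kernel entering only through the $|\mathcal{K}_{h_1}|_1$ factor inside the logarithm. One small tightening: since $\beta=0$ is feasible, the envelope is $B=1$ rather than $1+\lambda$, which is what lets the constants $1.1$ and $9$ come out as stated.
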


The above theorem follows from the covering number bound above and Proposition 22 from~\cite{vainsencher11}. The definition of $r_{\lambda}(x)$ differs from (1) by a square term, but it could  easily be incorporated into the above bounds resulting in an additive factor of $2$ inside the logarithm term.

\section{Experiments}
We demonstrate the advantage of the proposed approach both in terms of speed-up and accuracy, over standard sparse coding. A detailed description of all real-world data sets used in the experiments are given in the appendix. 

\subsection{Speed comparison} 
We conducted synthetic experiments to examine the speed-up provided by sparse coding with marginal regression. The data was generated from a a 100 dimensional mixture of two Gaussian distribution that satisfies   $\|\mu_1 - \mu_2\|_2 = 3$ (with identity covariance matrices). The dictionary size was fixed at $1024$. 

We compare the proposed smooth sparse coding algorithm, standard sparse coding with lasso~\cite{lee2007efficient} and marginal regression updates respectively, with a relative reconstruction error $\|X - \hat D \hat B\|_F /\|X\|_F$ convergence criterion. We experimented with different values of the relative reconstruction error (less than 10\%) and report the average time. From Table~\ref{tab:timecomp}, we see that smooth sparse coding with marginal regression takes significantly less time to achieve a fixed reconstruction error. This is due to the fact that it takes advantage of the spatial structure and use marginal regression updates. It is worth mentioning that standard sparse coding  with marginal regression updates performs faster compared to the other two methods that uses lasso updates, as expected (but does not take into account the spatial structure). 

\begin{table}[h!]
\centering
\begin{tabular}{|c|c|}
\hline
 Method & time (sec) \\ \hline \hline
SC+LASSO & 560.4 $\pm$13 \\ \hline
SC+MR &   250.6$\pm$18\\ \hline
SSC+LASSO&  540.2$\pm$12    \\ \hline
SSC+MR&    186.4 $\pm$10 \\ \hline
\end{tabular}
\caption{ Time comparison of coefficient learning in SC and SSC with either Lasso or Marginal regression updates. The dictionary update step was same for all methods.}
\label{tab:timecomp}
\vspace{-0.2in}
\end{table}

\subsection{Experiments with Kernel in Feature space}
We conducted several experiments demonstrating the advantage of the proposed coding scheme in different settings. Concentrating on face and object recognition from static images, we evaluated the performance of the proposed approach along with standard sparse coding and LLC~\cite{yu2009nonlinear}, another method for obtaining sparse features based on locality.  Also, we performed experiments on activity recognition from videos based on both space and time based kernels. As mentioned before all results are reported using tricube kernel.

\subsubsection{Image classification}
We conducted image classification experiments on CMU-multipie, 15 Scene and Caltech-101 data sets. Following~\cite{yang2010supervised} , we used the following approach for generating sparse image representation: we densely sampled $16\times 16$ patches from images at the pixel level on a gird with step size 8 pixels, computed SIFT features, and then computed the corresponding sparse codes over a  $1024$-size dictionary. We used max pooling to get the final representation of the image based on the codes for the patches. The process was repeated with different randomly selected training and testing images and we report the average per-class recognition rates (together with its standard deviation estimate) based on one-vs-all SVM classification. We used cross validation to select the regularization and bandwidth parameters. 

As Table~\ref{tab:table1} indicates, our smooth sparse coding algorihtm resulted in significantly higher classification accuracy than standard sparse coding and LLC. In fact, the reported performance  is better than previous reported results using unsupervised sparse coding techniques \cite{yang2010supervised}.

\begin{table}[h!]
\centering
\begin{tabular}{|c|c|c|c|}
\hline
 & CMU-multipie& 15 scene & Caltech-101 \\ \hline \hline
SC & 92.70$\pm$1.21 & 80.28$\pm$2.12& 73.20$\pm$1.14 \\ \hline
LLC & 93.70$\pm$2.22 & 82.28$\pm$1.98& 74.82$\pm$1.65 \\ \hline
SSC&    94.14 $\pm$2.01 & 84.10$\pm$1.87 & 76.24$\pm$2.15 \\ \hline
\end{tabular}
\caption{ Test set error accuracy for face recognition on CMU-multipie data set (left) 15 scene (middle) and Caltech-101 (right) respectively. The performance of the smooth sparse coding approach is better than the standard sparse coding and LLC in all cases. }
\label{tab:table1}
\end{table}

\begin{table}[h!]
\centering
\begin{tabular}{|c|c|c|}
\hline
 Dictionary size & 15 scene& Caltech-101 \\ \hline \hline
1024 & 84.10$\pm$1.87  &76.24 $\pm$2.15\\ \hline
2048 & 87.43$\pm$1.55 &78.33$\pm$1.43 \\ \hline
4096&    89.53$\pm$2.00  & 79.11$\pm$0.87 \\ \hline
\end{tabular}
\caption{ Effect of dictionary size on classification accuracy using smooth sparse coding and marginal regression on 15 scene and Caltech -101 data set. }
\label{tab:dsize}
\end{table}



\textit{Dictionary size:} In order to demonstrate the use of scalability of the proposed method with respect to dictionary size, we report classification accuracy with increasing dictionary sizes using smooth sparse coding. The main advantage of the proposed marginal regression training method is that one could easily run experiments with larger dictionary sizes, which typically takes a significantly longer time for other algorithms. For both the Caltech-101 and 15-scene data set, classification accuracy increases significantly with increasing dictionary sizes as seen in Table~\ref{tab:dsize}.

\subsubsection{Action recognition:}

We further conducted an experiment on activity recognition from videos with KTH action and YouTube data set (see Appendix). Similar to the static image case, we follow the standard approach for generating sparse representations for videos as in \cite{wang2009evaluation}.  We densely sample $16 \times 16 \times 10$ blocks from the video and extract HoG-3d~\cite{klser2008spatio} features from the sampled blocks. We then use smooth sparse coding and max-pooling to generate the video representation (dictionary size was fixed at $1024$ and cross-validation was used to select the regularization and bandwidth parameters). Previous approaches include sparse coding, vector quantization, and $k$-means on top of the HoG-3d feature set (see \cite{wang2009evaluation} for a comprehensive evaluation). As indicated by Table~\ref{tab:table3}, smooth sparse coding results in higher classification accuracy than previously reported state-of-the-art and standard sparse coding on both datasets (see \cite{wang2009evaluation,liu2009recognizing} for a description of the alternative techniques).

\begin{table}[t]
\centering
\begin{tabular}{|c|c|c|}
\hline
Cited method & SC& SSC \\ \hline \hline
92.10~\cite{wang2009evaluation} & 92.423& 93.549 \\ \hline
71.2 ~\cite{liu2009recognizing}& 72.640& 74.974 \\ \hline
\end{tabular}
\caption{ Action recognition (accuracy) for cited method (left), Hog3d+ SC (middle) and Hog3d+ SSC (right): KTH data set(top) YouTube action dataset (bottom).}
\label{tab:table3}
\end{table}



\subsubsection{Discriminatory power}

In this section, we describe another experiment that contrasts the codes obtained by sparse coding and smooth sparse coding in the context of a subsequent classification task. As in \cite{yulearning}, we first compute the codes in both case based on patches and combine it with max-pooling to obtain the image level representation. We then compute the fisher discriminant score (ratio of within-class variance to between-class variance) for each dimension as measures of the discrimination power realized by the representations.

\begin{figure*}[t]
\centering
\includegraphics[scale=0.5]{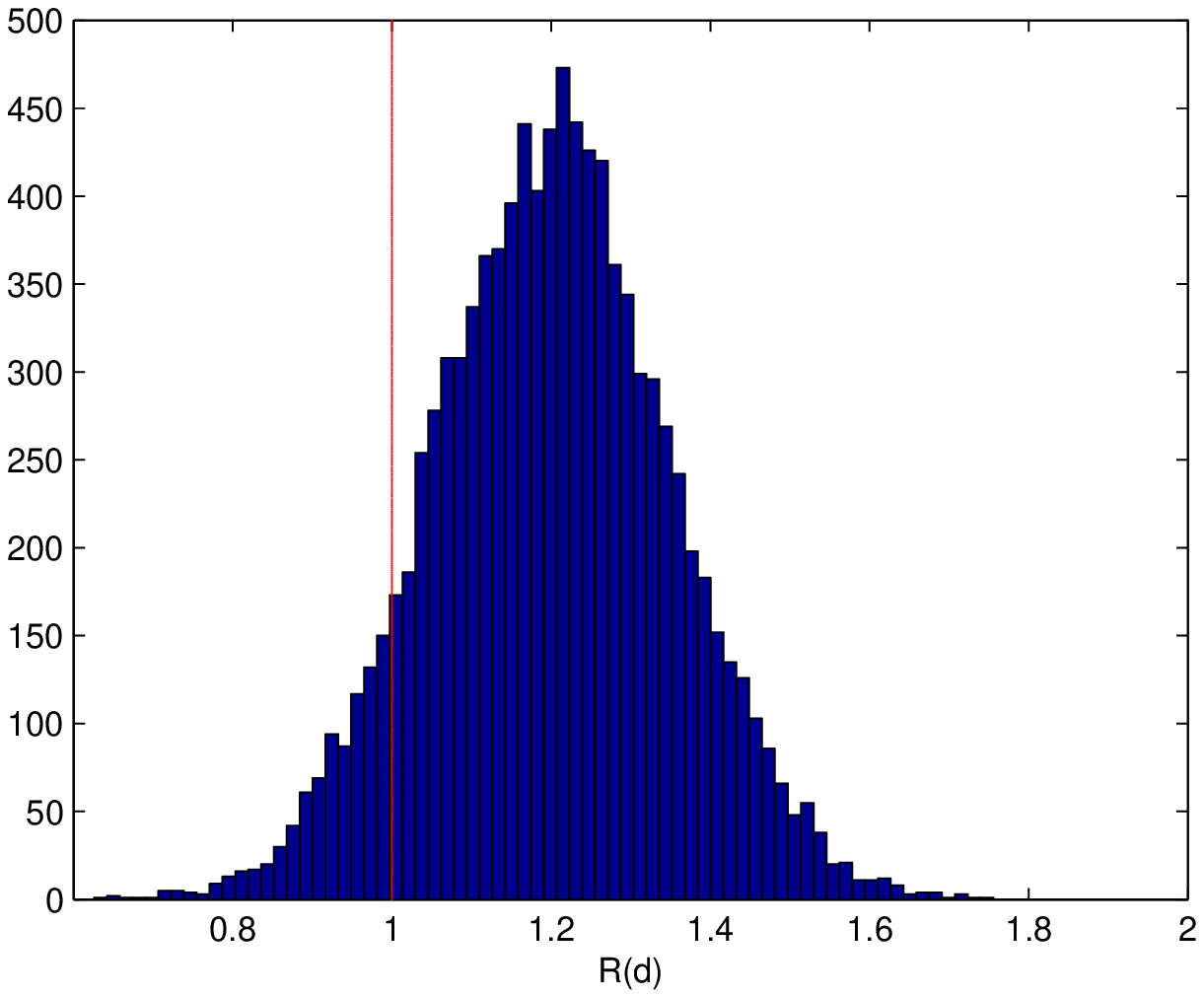}
\includegraphics[scale=0.5]{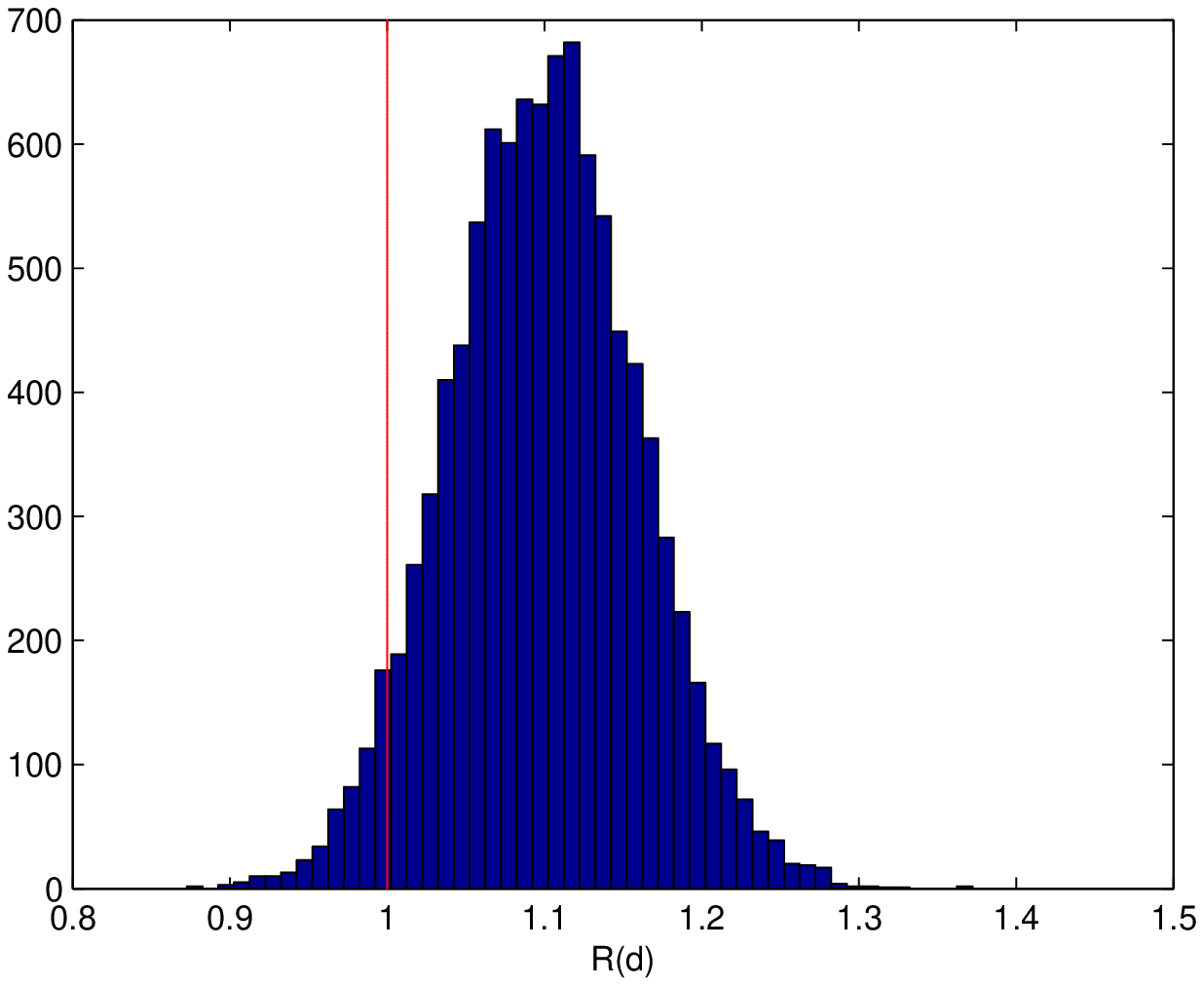}
\caption{Comparison between the histograms of Fisher discriminant score realized by sparse coding and smooth sparse coding. The images represent the histogram of the ratio of smooth sparse coding Fisher score over standard sparse coding Fisher score (left: image data set; right: video). A value greater than 1 implies that smooth sparse coding is more discriminatory.}
\label{fig:disc}
\end{figure*}

Figure~\ref{fig:disc}, graphs a histogram of the ratio of smooth sparse coding Fisher score over standard sparse coding Fisher score $R(d) = F_1(d)/F_2(d)$ for 15-scene dataset (left) and Youtube dataset (right). Both histograms demonstrate the improved discriminatory power of smooth sparse coding over regular sparse coding.

\subsection{Experiments using Temporal Smoothing} \label{sec:temporalexp}
 
In this section we describe an experiment conducted using the temporal smoothing kernel on the Youtube persons dataset. We extracted SIFT descriptors for every $16\times 16$ patches sampled on a grid of step size 8 and used smooth sparse coding with time kernel to learn the codes and max pooling to get the final video representation. We avoided pre-processing steps such as face extraction or face tracking. 
Note that in the previous action recognition video experiment, video blocks were densely sampled and used for extracting HoG-3d features. In this experiment, on the other hand, we extracted SIFT features from individual frames and used the time kernels to incorporate the temporal information into the sparse coding process.

\begin{table}[h]
\centering
\begin{tabular}{|c|c|c|c|}
\hline
Method & Fused Lasso&SC & SSC-tricube   \\ \hline \hline
Accuracy& 68.59 & 65.53 & 69.01\\ \hline
\end{tabular}
\caption{Linear SVM accuracy for person recognition task from YouTube face video dataset.}
\label{tab:table4}
\end{table}

For this case, we also compared to the more standard fused-lasso based approach~\cite{tibshirani2005sparsity}. Note that in fused Lasso based approach, in addition to the standard $L_1$ penalty, an additional $L_1$ penalty on the difference between the neighboring frames for each dimensions is used. This tries to enforce the assumption that in a video sequence, neighboring frames are more related to one another as compared to frames that are farther apart. 

Table~\ref{tab:table4} shows that smooth sparse coding achieved higher accuracy than fused lasso and standard sparse coding. Smooth sparse coding has comparable accuracy on person recognition tasks to other methods that use face-tracking, for example \cite{kim2008face}. Another advantage of smooth sparse coding is that it is significantly faster than sparse coding and the used lasso.

\section{ Semi-supervised smooth sparse coding} 

One of the primary difficulties in some image classification tasks is the lack of availability of labeled data and in some cases, both labeled and unlabeled data (for particular domains). This motivated  semi-supervised learning and transfer learning without labels~\cite{raina2007self} respectively. The motivation for such approaches is that data from a related domain might have some visual patterns that might be similar to the problem at hand. Hence, learning a high-level dictionary based on data from a different domains aids the classification task of interest. 

We propose that the smooth sparse coding approach might be useful in this setting. The motivation is as follows: in semi-supervised, typically not all samples from a different data set might be useful for the task at hand. Using smooth sparse coding, one can weigh the useful points more than the other points (the weights being calculated based on feature/time similarity kernel) to obtain better dictionaries and sparse representations. Other approach to handle a lower number of labeled samples include collaborative modeling or multi-task approaches which impose a shared structure on the codes for several tasks and use data from all the tasks simultaneously, for example group sparse coding~\cite{Bengio2009}. The proposed approach provides an alternative when such collaborative modeling assumptions do not hold, by using relevant unlabeled data samples that might help the task at hand via appropriate weighting.  

We now describe an experiment that examines the proposed smoothed sparse coding approach in the context of semi-supervised dictionary learning. We use data from both CMU multi-pie dataset (session 1) and faces-on-tv dataset (treated as frames) to learn a dictionary using a feature similarity kernel. We follow the same procedure described in the previous experiments to construct the dictionary. In the test stage we use the obtained dictionary for coding data from sessions 2, 3, 4 of CMU-multipie data set, using smooth sparse coding. Note that semi-supervision was used only in the dictionary learning stage (the classification stage used supervised SVM). 

Table \ref{tab:sslssc} shows the test set error rate and compares it to standard sparse coding and LLC~\cite{yu2009nonlinear}. Smooth sparse coding achieves significantly lower test error rate than the two alternative techniques. We conclude that the smoothing approach described in this paper may be useful in cases where there is a small set of labeled data, such as semisupervised learning and transfer learning.

\begin{table}[h!]
\centering
\begin{tabular}{|c|c|c|c|}
\hline
Method &SC & LLC& SSC-tricube  \\ \hline \hline
Test errror & 6.345& 6.003& 5.135 \\ \hline
\end{tabular}
\caption{Semi-supervised learning test set error: Dictionary learned from both CMU multi-pie and faces-on-tv data set using feature similarity kernel, used to construct sparse codes for  CMU multipie data set.}
\label{tab:sslssc}
\end{table}

\section{Discussion and Future work}

We proposed a simple framework for incorporating similarity in feature space and space or time into sparse coding. The codes obtained by smooth sparse coding are significantly more discriminatory than traditional sparse coding, and lead to substantially improved classification accuracy as measured on several different image and video classification tasks.

We also propose in this paper modifying sparse coding by replacing the lasso optimization stage by marginal regression and adding a constraint to enforce incoherent dictionaries. The resulting algorithm is significantly faster (speedup of about two-orders of magnitude over standard sparse coding). This facilitates scaling up the sparse coding framework to large dictionaries, an area which is usually restricted due to intractable computation. We also explore promising extensions to temporal smoothing, semi-supervised learning and transfer learning. We provide bounds on the covering numbers that lead to generalization bounds for the smooth sparse coding dictionary learning problem. 

There are several ways in which the proposed approach can be extended. First, using an adaptive or non-constant kernel bandwidth should lead to higher accuracy. It is also interesting to explore tighter generalization error bounds by directly analyzing the solutions of the marginal regression iterative algorithm. Another potentially useful direction is to explore alternative incoherence constraints that lead to easier optimization and scaling up.

\clearpage
\bibliographystyle{plain} 

\bibliography{papers}

\newpage
\onecolumn
\section{Appendix}
\subsection{Data set Description}

\subsubsection{CMU Multi-pie face recognition:} 
The face recognition experiment was conducted on the CMU Multi-PIE dataset. The dataset is challenging due to the large number of subjects and is one of the standard data sets used for face recognition experiments. The data set contains 337 subjects across simultaneous variations in pose, expression, and illumination. We ignore the 88 subjects that were considered as outliers in~\cite{yang2010supervised} and used the rest of the images for our face recognition experiments.  We follow~\cite{yang2010supervised} and use the 7 frontal extreme illuminations from session one as train set and use other 20 illuminations from Sessions 2-4 as test set.

\subsubsection{15 Scenes Categorization:}
We also conducted scene classification experiments on the 15-Scenes data set. This data set consist of 4485 images from 15 categories, with the number of images each category ranging from 200 to 400. The categories corresponds to scenes from various settings like kitchen, living room etc. Similar to the previous experiment, we extracted patches from the images and computed the SIFT features corresponding to the patches. 

\subsubsection{Caltech-101 Data set:}

The Caltech-101 data set consists of images from 101 classes like animals, vehicles, flowers, etc. The number of images per category varies from 30 to 800. Most images are of medium resolution ($300 \times 300$). All images are used a gray-scale images.  Following previous standard experimental settings for Caltech-101 data set, we use 30 images per category and test on the rest. Average classification accuracy normalized by class frequency is used for evaluation. 

\subsubsection{Activity recognition}

The KTH action dataset consists of 6 human action classes. Each action is performed several times by 25 subjects and is recorded in four different scenarios. In total, the data consists of 2391 video samples. The YouTube actions data set has 11 action categories and is more complex and challenging~\cite{liu2009recognizing}. It has 1168 video sequences of varied illumination, background, resolution etc. We randomly densely sample blocks (400 cuboids) of video from the data sample and extract HOG-3d features and constructed the video features as described above. 

\subsubsection{Youtube person data set} \label{sec:temporalexp}

Similar to the experiments using the feature smoothing kernel, in this section we report results on experiment conducted using the time smoothed kernel. Specifically, we used the YouTube person data set~\cite{kim2008face} in order to recognize people, based on time-based kernel smooth sparse coding. The dataset contains 1910 sequences of 47 subjects. The architecture for this dataset is similar to ~\cite{yang2009linear}. 

\end{document}